\date{}
\title{Length bias in Encoder Decoder Models and a Case for Global Conditioning}
\author{
  Pavel Sountsov\\
  Google \\
  \texttt{siege@google.com} \\
\And
  Sunita Sarawagi~\thanks{\hspace{0.5em} Work done while visiting Google Research on a leave from IIT Bombay.}\\
  IIT Bombay \\
  \texttt{sunita@iitb.ac.in} \\
}
\begin{document}

\maketitle

\begin{abstract}
Encoder-decoder networks are popular for modeling
sequences  probabilistically in many applications.  These models use the power of the
Long Short-Term Memory (LSTM) architecture to capture the {\em full
  dependence} among variables, unlike earlier models like CRFs that
typically assumed conditional independence among non-adjacent
variables.  However in practice encoder-decoder models exhibit a bias
towards short sequences that surprisingly gets worse with increasing
beam size.

In this paper we show that such phenomenon is due to a discrepancy
between the full sequence margin and the per-element margin enforced by
the locally conditioned training objective of a encoder-decoder model.
The discrepancy more adversely impacts long sequences, explaining the
bias towards predicting short sequences.

For the case where the predicted sequences come from a closed set, we show that
a globally conditioned model alleviates the above problems of
encoder-decoder models.  
From a practical point of view, our proposed model also eliminates the
need for a beam-search during inference, which reduces to an efficient
dot-product based search in a vector-space.
\end{abstract}

\section{Introduction}
In this paper we investigate the use of neural networks for
modeling the conditional distribution $\Pr(\vy | \vx)$ over sequences
$\vy$ of discrete tokens in response to a complex input $\vx$, which can be
another sequence or an image.  Such models have applications in
machine translation~\cite{BahdanauCB14,SutskeverVL14}, image captioning~\cite{VinyalsTBE15}, response generation in emails~\cite{smartreply}, and conversations~\cite{smartchat,VinyalsL15,LiGBGD15}.

The most popular neural network for probabilistic modeling of
sequences in the above applications is the encoder-decoder (ED)
network~\cite{SutskeverVL14}.  A ED network first encodes an input
$\vx$ into a vector which is then used to initialize a recurrent
neural network (RNN) for decoding the output $\vy$.  The decoder RNN
factorizes $\Pr(\vy|\vx)$ using the chain rule as
$\prod_j \Pr(y_j|y_1,\ldots,y_{j-1},\vx)$ where $y_1,\ldots,y_n$
denote the tokens in $\vy$.
This factorization does not entail any conditional independence
assumption among the $\{y_j\}$ variables.  This is unlike earlier
sequence models like CRFs~\cite{Lafferty2001} and
MeMMs~\cite{Mccallum00} that typically assume that a token is
independent of all other tokens given its adjacent tokens.  Modern-day
RNNs like LSTMs promise to capture non-adjacent and long-term
dependencies by summarizing the set of previous tokens in a
continuous, high-dimensional state vector.  Within the limits of
parameter capacity allocated to the model, the ED, by virtue of exactly
factorizing the token sequence, is consistent.

However, when we created and deployed an ED model for a chat
suggestion task we observed several counter-intuitive patterns in its
predicted outputs.  Even after training the model over billions of
examples, the predictions were systematically biased towards short
sequences.  Such bias has also been seen in
translation~\cite{ChoMBB14}.  Another curious phenomenon was that the
accuracy of the predictions sometimes dropped with increasing
beam-size, more than could be explained by statistical variations of a
well-calibrated model~\cite{marc2016sequence}.

In this paper we expose a margin discrepancy in the training loss
of encoder-decoder models to explain the above problems in its
predictions.  We show that the training loss of ED network often
under-estimates the margin of separating a correct sequence from an
incorrect shorter sequence.  The discrepancy gets more severe as the
length of the correct sequence increases.  That is, even after the
training loss converges to a small value, full inference on the
training data can incur errors causing the model to be under-fitted
for long sequences in spite of low training cost.  We call this the
length bias problem.

We propose an alternative model that avoids the margin discrepancy by
globally conditioning the $P(\vy|\vx)$ distribution.  Our model is
applicable in the many practical tasks where the space of allowed
outputs is closed.  For example, the responses generated by the smart
reply feature of Inbox is restricted to lie within a hand-screened
whitelist of responses $\cW \subset \cY$~\cite{smartreply}, and the
same holds for a recent conversation assistant feature of Google's
Allo~\cite{smartchat}.  Our model uses a second RNN encoder to
represent the output as another fixed length vector.  We show that our
proposed encoder-encoder model produces better calibrated whole
sequence probabilities and alleviates the length-bias problem of ED
models on two conversation tasks.
A second advantage of our model is that inference is significantly
faster than ED models and is guaranteed to find the globally optimal
solution.  In contrast, inference in ED models requires an expensive
beam-search which is both slow and is not guaranteed to find the optimal
sequence.

\section{Length Bias in Encoder-Decoder Models}
\newtheorem{theorem}{Theorem}[section]
\newtheorem{corollary}{Corollary}[theorem]
\newtheorem{lemma}[theorem]{Lemma}
\newtheorem{claim}[theorem]{Claim}

In this section we analyze the widely used encoder-decoder neural
network for modeling $\Pr(\vy | \vx)$ 
over the space of discrete output sequences. 
We use $y_1,\ldots,y_n$ to denote the tokens in a sequence $\vy$.
Each $y_i$ is a discrete symbol from a finite dictionary $V$ of size
$m$.  Typically, $m$ is large.
The length $n$ of a sequence is allowed to vary from sequence to
sequence even for the same input $\vx$.  A special token EOS $\in V$
is used to mark the end of a sequence.  We use $\cY$ to denote the
space of such valid sequences and $\theta$ to denote the parameters of
the model.

\subsection{The encoder-decoder network}
The Encoder-Decoder (ED) network represents $\Pr(\vy | \vx, \theta)$
by applying chain rule to exactly factorize it as
$\prod_{t=1}^n\Pr(y_t | y_1,\ldots,y_{t-1},\vx, \theta)$.  First, an encoder
with parameters $\theta_x \subset \theta$ is used to transform $\vx$
into a $d$-dimensional real-vector $\vv_x$.  The network used for the
encoder depends on the form of $\vx$ --- for example, when $\vx$ is
also a sequence, the encoder could be a RNN.
The decoder then computes each $\Pr(y_t | y_1,\ldots,y_{t-1},\vv_x,
\theta)$ as
\begin{equation}
\Pr(y_t | y_1,\ldots,y_{t-1},\vv_x,
\theta) = P(y_t | \vs_t, \theta),
\end{equation}
where $\vs_t$ is a state vector implemented using a recurrent neural network as
\begin{equation}
\label{rnn-eq}
  \vs_t = 
      \begin{cases}
      \vv_x & \text{if}~ t  = 0, \\
      \text{RNN}(\vs_{t-1}, \theta_{E,y_{t-1}}, \theta_R) & \text{otherwise}.
    \end{cases}
\end{equation}
where RNN() is typically a stack of LSTM cells that captures long-term
 dependencies, $\theta_{E,y} \subset \theta$ are parameters denoting
 the embedding for token $y$, and $\theta_R \subset \theta$ are the
 parameters of the RNN.  The function $\Pr(y | \vs, \theta_y)$ that
 outputs the distribution over the $m$ tokens is a softmax:
\begin{equation}
  \label{softmax-eq}
  \Pr(y | \vs, \theta) = \frac{e^{\vs\theta_{S,y}}}{e^{\vs\theta_{S,1}}+\ldots+e^{\vs\theta_{S,m}}},
\end{equation}
where $\theta_{S,y} \subset \theta$ denotes the parameters for token
$y$ in the final softmax.

\subsection{The Origin of Length Bias}
\label{sec:length-bias}
The ED network builds a single probability distribution over sequences
of arbitrary length.  For an input $\vx$, the network needs to choose
the highest probability $\vy$ among valid candidate sequences of
widely different lengths.  Unlike in applications like entity-tagging
and parsing where the length of the output is determined based on the
input, in applications like response generation valid outputs can
be of widely varying length.  Therefore, $\Pr(\vy|\vx, \theta)$ should
be well-calibrated over all sequence lengths.  Indeed under infinite
data and model capacity the ED model is consistent and will represent
all sequence lengths faithfully.  In practice when training data is
finite, we show that the ED model is biased against long sequences.
Other researchers~\cite{ChoMBB14} have reported this bias but we are
not aware of any analysis like ours explaining the reasons of this
bias.

\begin{claim}
  The training loss of the ED model under-estimates the margin of
  separating long sequences from short ones.
\end{claim}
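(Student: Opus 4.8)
The plan is to exhibit, for every sufficiently long correct sequence, a particular \emph{shorter} competitor — namely its own truncation — that the model scores strictly higher, and then to track exactly which terms of the resulting margin the locally-normalized loss does and does not control. Fix a training pair $(\vx,\vy)$ with $\vy=(y_1,\dots,y_{n-1},\text{EOS})$, write $p_t=\Pr(y_t\mid y_{<t},\vx,\theta)$ and $a_t=-\log p_t\ge 0$, so that the training loss on this example is $L(\vy)=\sum_{t=1}^{n}a_t$. For each $k<n-1$ let $\vy^{(k)}=(y_1,\dots,y_k,\text{EOS})$, a valid but incorrect — and strictly shorter — output, and let $q_k=\Pr(\text{EOS}\mid y_{\le k},\vx,\theta)$. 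Since $\vy$ and $\vy^{(k)}$ share the prefix $y_{1},\dots,y_{k}$, all factors for positions $1,\dots,k$ cancel, and the full-sequence margin against this truncation is
\begin{equation}
  m_k \;:=\; \log\Pr(\vy\mid\vx)-\log\Pr(\vy^{(k)}\mid\vx)
       \;=\; -\log q_k \;-\; \sum_{t=k+1}^{n} a_t .
\end{equation}
This identity is essentially the whole content; everything after is reading it correctly.

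Next I would separate the quantities the objective actually sees from the ones it is blind to. The only term of $m_k$ that the per-step training loss directly shapes is $-\log q_k$: at position $k{+}1$ the loss is $a_{k+1}$, and softmax normalization forces $q_k\le 1-e^{-a_{k+1}}\le a_{k+1}$, so a small per-step loss does push $q_k$ down — but only to the irreducible level left by finite capacity, finite data, and any smoothing; nothing in the objective drives it further. Hence $-\log q_k\le M$ for a constant $M$ that is a \emph{per-position} quantity, independent of $n$. The subtracted term $\sum_{t=k+1}^{n}a_t$, by contrast, is a sum of $n-k$ nonnegative per-step losses; with a nonzero floor $\epsilon$ on the per-token loss — equivalently, using the empirical fact that $-\log\Pr(\vy\mid\vx)$ grows with the sequence length $n$ — it is $\Omega((n-k)\epsilon)$ and so grows with the length of the correct sequence. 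Combining the two, $m_k\le M-(n-k)\epsilon$, so as soon as $n-k>M/\epsilon$ we get $m_k<0$: the model ranks the short truncation above the correct sequence, even though every per-step loss $a_t$ is tiny and the per-step gap between $y_{k+1}$ and EOS at the branch point is healthy.

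Read the other way, the identity says the locally-normalized loss provisions only a single per-element margin, $-\log q_k$, at each branch point, whereas keeping the length-$n$ correct sequence ahead of its length-$(k{+}1)$ truncation actually demands a per-branch margin of at least $\sum_{t=k+1}^{n}a_t$, which grows with $n$. Thus the margin the training loss enforces falls short of the margin genuinely needed to separate long sequences from short ones, and the shortfall widens with length — which is exactly the claimed under-estimation, and the mechanism behind length bias and its worsening with beam size. The one step that is not pure algebra — and what I expect to be the main obstacle — is pinning down both the ceiling $M$ on $-\log q_k$ and the lower bound $\Omega((n-k)\epsilon)$ on the tail sum: each rests on the finite-data/finite-capacity regime, and indeed under unlimited data and capacity every $a_t\to 0$, the inequality dissolves, and the ED model is once again consistent — precisely as the surrounding text anticipates.
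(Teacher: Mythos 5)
Your proposal is correct and follows essentially the same route as the paper: you decompose the full-sequence margin against a short competitor into a branch-point term that the local loss actually controls (your $-\log q_k$, the paper's $m_L$) plus a tail sum of per-step log-losses that is blind to the competition and grows in magnitude with length (your $\sum_{t>k} a_t$, the paper's $-m_R$), then observe that regularization/finite capacity caps the former while the latter grows, so the global margin goes negative even when the local one is healthy. The only cosmetic difference is your choice of competitor — a truncation of the correct sequence rather than the paper's separate length-$1$ wrong output — which makes the prefix cancellation exact but does not change the argument.
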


\begin{proof}
Let $\vx$ be an input for which a correct output $\vy^+$ is of
length $\ell$ and an incorrect output $\vy^-$ is of length 1.
Ideally, the training loss should put a positive margin 
between $\vy^+$ and $\vy^-$ which is $\log\Pr(\vy^+|\vx)
- \log \Pr(\vy^-|\vx)$.  Let us investigate if the maximum likelihood
training objective of the ED model achieves that. We can write this objective as:
\begin{equation}
\label{eq:edobj}
\max_\theta \log\Pr(y_1^+|\vx,\theta) + \sum_{j=2}^\ell \log\Pr(y_j^+|y_{1\ldots j-1}^+,\vx,\theta).
\end{equation}
Only the first term in the above objective is involved in enforcing a
margin between $\vy^+$ and $\vy^-$ because $\log\Pr(y_1^+|\vx)$ is
maximized when $\log\Pr(y_1^-|\vx)$ is correspondingly minimized.  Let
$m_L(\theta) = \log\Pr(y_1^+|\vx,\theta) - \log\Pr(y_1^-|\vx,\theta)$,
the local margin from the first position and $m_R(\theta)
=\sum_{j=2}^\ell \log\Pr(y_j^+|\vy_{1\ldots j-1}^+,\vx,\theta)$.
It is easy to see that our desired margin between $\vy^+$ and $\vy^-$
is $\log\Pr(\vy^+|\vx) - \log\Pr(\vy^-|\vx) = m_L + m_R$. Let $m_g =
m_L + m_R$.  Assuming two possible labels for the first position
($m=2$)~\footnote{For $m>2$, the objective will be upper bounded by $\min_\theta\log(1 + (m-1)e^{-m_L(\theta)}) - m_R(\theta)$. The argument that follows remains largely unchanged}, the training objective in Equation~\ref{eq:edobj} can now be
rewritten in terms of the margins as:
\[\min_\theta \log(1+e^{-m_L(\theta)}) - m_R(\theta)\]
We next argue that this objective is not aligned with our ideal goal of
making the global margin $m_L+m_R$ positive.

First, note that $m_R$ is a log probability which under finite
parameters will be non-zero.  Second, even though $m_L$ can take any
arbitrary finite value, the training objective drops rapidly when
$m_L$ is positive.
When training objective is regularized and training data is finite,
the model parameters $\theta$ cannot take very large values and the
trainer will converge at a small positive value of $m_L$.
Finally, we
show that the value of $m_R$ decreases with increasing sequence
length.  For each position $j$ in the sequence, we add to $m_R$
log-probability of $y_j^+$.  The maximum value of
$\log\Pr(y_j^+|\vy_{1\ldots j-1}^+,\vx,\theta)$ is $\log(1-\epsilon)$
where $\epsilon$ is non-zero and decreasing with the magnitude of the
parameters $\theta$.  In general, $\log\Pr(y_j^+|\vy_{1\ldots
j-1}^+,\vx,\theta)$ can be a much smaller negative value when the
input $\vx$ has multiple correct responses as is common in
conversation tasks.  For example, an input like $\vx=$`How are you?',
has many possible correct outputs: $\vy \in $\{`I am good', `I am
great', `I am fine, how about you?', etc\}.  Let $f_j$ denote the
relative frequency of output $y_j^+$ among all correct responses with
prefix $\vy_{1\ldots j-1}^+$.   The value of $m_R$ will be upper bounded as
\[m_R \le \sum_{j=2}^{\ell}\log\min(1-\epsilon, f_j)\]
This term is negative always and increases in magnitude as sequence
length increases and the set of positive outpus have high entropy.
In this situation, when combined with regularization, our desired margin $m_g$
may not remain positive even though $m_L$ is positive. In summary, the core
issue here is that since the ED loss is optimized and regularized on the
local problem it does not control for the global, task relevant margin.
\end{proof}


This mismatch between the local margin optimized during training and
the global margin explains the length bias observed by us and others
~\cite{ChoMBB14}.  During inference a shorter sequence for which $m_R$
is smaller wins over larger sequences.

This mismatch also explains
why increasing beam size leads to a drop in accuracy
sometimes~\cite{marc2016sequence}\footnote{Figure 6 in the paper shows
a drop in BLEU score by 0.5 as the beam size is increased from 3
to 10.}.
When beam size is large, we are more likely to dig out short sequences
that have otherwise been separated by the local margin.  We show
empirically in Section~\ref{sec:results} that for long sequences
larger beam size hurts accuracy whereas for small sequences the effect
is the opposite.

\subsection{Proposed fixes to the ED models}
Many ad hoc approaches have been used to alleviate length bias
directly or indirectly. Some resort to normalizing the probability by
the full sequence length \cite{ChoMBB14,Graves13} whereas
\cite{AbadieBMCB14} proposes segmenting longer sentences into shorter phrases.
\cite{ChoMBB14} conjectures that the length bias
of ED models could be because of limited representation power of the
encoder network. Later more powerful encoders based on attention
achieved greater accuracy~\cite{BahdanauCB14} on long sequences.
Attention can be viewed as a mechanism of improving the capacity of
the local models, thereby making the local margin $m_L$ more
definitive.  But attention is not effective for all tasks --- for
example, \cite{VinyalsL15} report that attention was not useful for
conversation.  

Recently~\cite{Bengio2015,marc2016sequence} propose another
modification to the ED training objective where the true token
$y_{j-1}$ in the training term $\log\Pr(y_j|y_1,\ldots,y_{j-1})$ is
replaced by a sample or top-k modes from the posterior at position
$j-1$ via a careful schedule.  Incidently, this fix also helps to
indirectly alleviate the length bias problem.  The sampling causes
incorrect tokens to be used as previous history for producing a
correct token.  If earlier the incorrect token was followed by a
low-entropy EOS token, now that state should also admit the correct
token causing a decrease in the probability of EOS, and therefore the
short sequence.

In the next section we propose our more direct fix to the margin
discrepancy problem.

\section{Globally Conditioned Encoder-Encoder Models}
We represent $\Pr(\vy|\vx,\theta)$ as a globally conditioned model
$\frac{e^{s(\vy|\vx,\theta)}}{Z(\vx,\theta)}$ where
$s(\vy|\vx,\theta)$ denotes a score for output $\vy$ and
$Z(\vx,\theta)$ denotes the shared normalizer.  We show in
Section~\ref{sec:margin} why such global conditioning solves the
margin discrepancy problem of the ED model.  The intractable
partition function in global conditioning introduces several new
challenges during training and inference.  In this section we discuss
how we designed our network to address them.
\begin{figure*}[ht]
  \includegraphics[width=\textwidth]{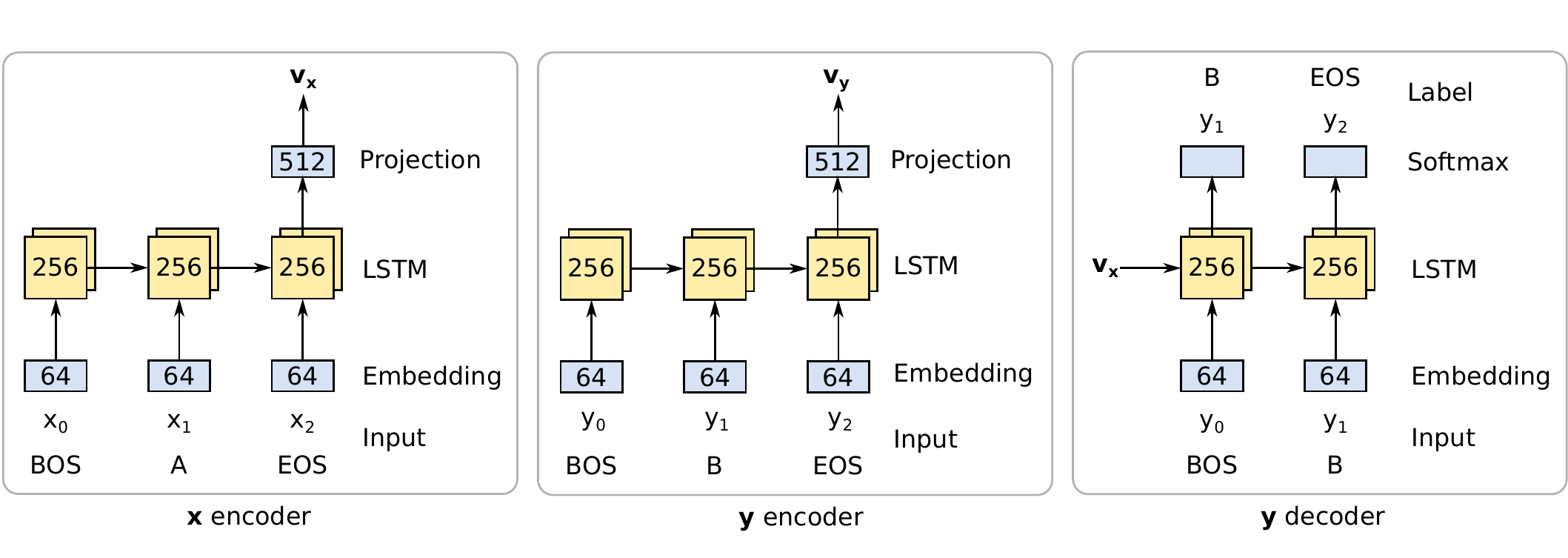}
    \caption{Neural network architectures used in our experiments. The context
    encoder network is used for both encoder-encoder and encoder-decoder models
    to encode the context sequence (`A') into a $\vv_x$. For the
    encoder-encoder model, label sequence (`B') are encoded into $\vv_y$ by the
    label encoder network. For the encoder-decoder network, the label sequence
    is decomposed using the chain rule by the decoder network.}
  \label{fig:networkdiagram}
\end{figure*}

Our model assumes that during inference the output has to be selected
from a given whitelist of responses $\cW \subset \cY$. In spite of
this restriction, the problem does not reduce to multi-class
classification because of two important reasons.  First, during
training we wish to tap all available input-output pairs including the
significantly more abundant outputs that do not come from the
whitelist.  Second, the whitelist could be very large and treating
each output sequence as an atomic class can limit generalization
achievable by modeling at the level of tokens in the sequence.

\subsection{Modeling $s(\vy|\vx,\theta)$}
We use a second encoder to convert $\vy$ into a vector $\vv_y$ of the
same size as the vector $\vv_x$ obtained by encoding $\vx$ as in a ED
network. The
parameters used to encode $\vv_x$ and $\vv_y$ are disjoint. As we are only interested in a fixed dimensional output, unlike in
ED networks, we have complete freedom in choosing the type of network to use
for this second encoder. For our experiments, we have chosen to use an RNN with
LSTM cells. Experimenting with other network architectures, such as
bidirectional RNNs remains an interesting avenue for future work. The score
$s(\vy|\vx,\theta)$ is the dot-product between $\vv_y$ and $\vv_x$.  Thus our model is
\begin{equation}
    \Pr(\vy|\vx) = \frac{e^{\vv_x^T\vv_y}}{{\sum_{\vy' \in \cY} e^{\vv_x^T\vv_{y'}}}}.
\end{equation}

\subsection{Training and Inference}
During training we use maximum likelihood to estimate $\theta$ given
a large set of valid input-output pairs
$\{(\vx^1,\vy^1),\ldots,(\vx^N, \vy^N)\}$ where each $\vy^i$ belongs
to $\cY$ which in general is much larger than $\cW$.  Our main
challenge during training is that $\cY$ is intractably
large for computing $Z$.  We decompose $Z$ as
\begin{equation}
    Z = e^{s(\vy|\vx,\theta)} + \sum_{\vy' \in \cY \setminus \vy} e^{s(\vy'|\vx,\theta)},
\end{equation}
and then resort to estimating the last term using importance sampling.
Constructing a high quality proposal distribution over $\cY \setminus \vy$ is
difficult in its own right, so in practice, we make the following
approximations. We extract the most common $T$ sequences across a data set
into a pool of negative examples. We estimate the empirical prior
probability of the sequences in that pool, $Q(\vy)$, and then draw $k$
samples from this distribution. We take care to remove the true sequence from
this distribution so as to remove the need to estimate its prior probability.

During inference, given an input $\vx$ we need to find $\argmax_{\vy
  \in \cW} s(\vy|\vx,\theta)$.  This task can be performed efficiently in
our network because the vectors $\vv_y$ for the sequences $\vy$ in the
whitelist $\cW$ can be pre-computed.  Given an input $\vx$, we compute
$\vv_x$ and take dot-product with the pre-computed vectors to find the
highest scoring response.  This gives us the optimal response.  When
$\cW$ is very large, we can obtain an approximate solution by indexing
the vectors $\vv_y$ of $\cW$ using recent methods specifically
designed for dot-product based retrieval ~\cite{Guo16}.

\subsection{Margin}
\label{sec:margin}

It is well-known that the maximum likelihood training objective of a
globally normalized model is margin maximizing~\cite{RossetZH03}. We
illustrate this property using our set up from Claim~2.1 where a
correct output $\vy^+$ is of length $\ell$ and an incorrect output
$\vy^-$ is of length 1 with two possible labels for each position
($m=2$).

The globally conditioned model learns a parameter per possible sequence and
assigns the probability to each sequence using a softmax over those parameters.
Additionally, we place a Gaussian prior on the parameters with a precision $c$.
The loss for a positive example becomes:

\begin{equation*}
    \cL_G(\vy^+) = -\log \frac{e^{-\theta_{\vy^+}}}{\sum_{\vy'}e^{-\theta_{\vy'}}} + \frac{c}{2} \sum_{\vy'}\theta_{\vy'}^2,
\end{equation*}
where the sums are taken over all possible sequences.

We also train an ED model on this task. It also learns a parameter for
every possible sequence, but assigns probability to each sequence
using the chain rule.  We also place the same Gaussian prior as above
on the parameters. Let $\vy_j$ denote the first $j$ tokens $\{y_1,
\ldots, y_j\}$ of sequence $\vy$. The loss for a positive example for
this model is then:
\begin{equation*}
    \cL_L(\vy^+) = -\sum_{j=1}^\ell \left( \log \frac{e^{-\theta_{\vy^+_j}}}{\sum_{\vy'_j}e^{-\theta_{\vy'_j}}} + \frac{c}{2} \sum_{\vy'_j}\theta_{\vy'_j}^2 \right),
\end{equation*}
where the inner sums are taken over all sequences of length $j$.

\begin{figure}[ht]
\begin{center}
  \includegraphics[width=3in]{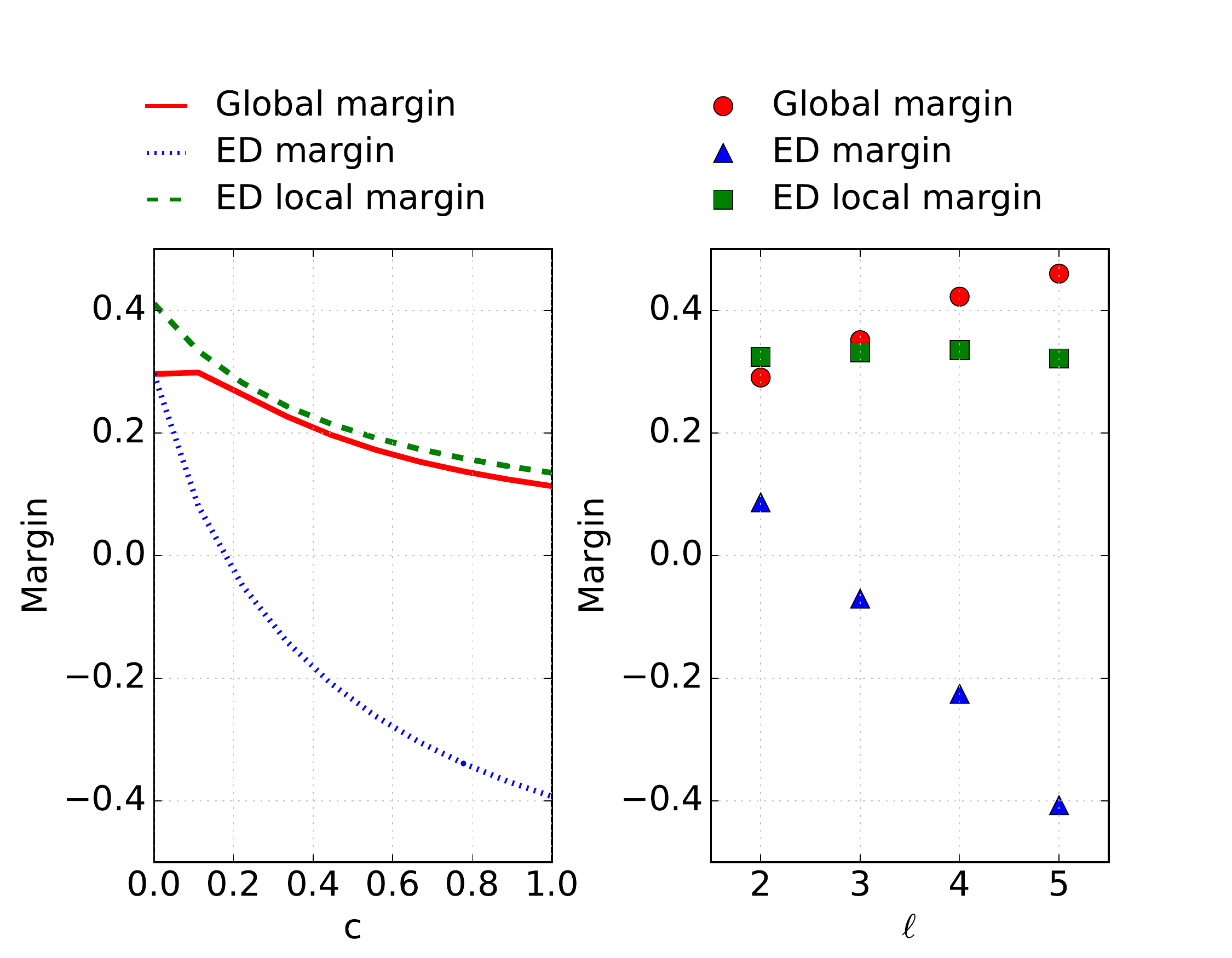}
  \caption{Comparing final margins of ED model with a globally conditioned
    model on example dataset of Section~\ref{sec:margin} as a function of
    regularization constant $c$ and message length $\ell$.}
  \label{fig:margin_toy}
\end{center}
\end{figure}

We train both models on synthetic sequences generated using the following rule.
The first token is chosen to be `1' probability 0.6. If `1' is chosen,
it means that this is a positive example and the remaining $\ell - 1$ tokens
are chosen to be `1' with probability $0.9^\frac{1}{\ell - 1}$. If
a `0' is chosen as the first token, then that is a negative example, and the
sequence generation does not go further. This means that there are $2^{\ell -
1}$ unique positive sequences of length $\ell$ and one negative sequence of length
1. The remaining possible sequences do not occur in the training or testing data.
By construction the unbiased margin between the most probable correct example
and the incorrect example is length independent and positive. We sample 10000
such sequences and train both models using Adagrad \cite{Duchi2011} for 1000
epochs with a learning rate of 0.1, effectively to convergence.

Figure~\ref{fig:margin_toy} shows the margin for both models (between the most
likely correct sequence and the incorrect sequence) and the local margin for
the ED model at the end of training. On the left panel, we used sequences with
$\ell = 2$ and varied the regularization constant $c$. When $c$ is zero, both
models learn the same global margin, but as it is increased the margin for the
ED model decreases and becomes negative at $c > 0.2$, despite the local margin
remaining positive and high. On the right panel we used $c = 0.1$ and varied
$\ell$. The ED model becomes unable to separate the sequences with length
above 2 with this regularization constant setting.

\section{Experiments}
\subsection{Datasets and Tasks}
We contrast the quality of the ED and encoder-encoder models on
two conversational datasets: Open Subtitles and Reddit Comments.

\subsubsection{Open Subtitles Dataset}

The Open Subtitles dataset consists of transcriptions of spoken dialog in
movies and television shows \cite{Lison2016}. We restrict our modeling only to
the English subtitles, of which results in $319$ million utternaces. Each
utterance is tokenized into word and punctuation tokens, with the start and end
marked by the BOS and EOS tokens. We randomly split out $90\%$ of the
utterances into the training set, placing the rest into the validation set. As
the speaker information is not present in this data set, we treat each
utterance as a label sequence, with the preceding utterances as context.

\subsubsection{Reddit Comments Dataset}

The Reddit Comments dataset is constructed from publicly available user
comments on submissions on the Reddit website. Each submission is associated
with a list of directed comment trees. In total, there are $41$ million
submissions and $501$ million comments. We tokenize the individual comments in
the same way as we have done with the utternaces in the Open Subtitles dataset.
We randomly split $90\%$ of the submissions and the associated comments into
the training set, and the rest into the validation set. We use each comment
(except the ones with no parent comments) as a label sequence, with the context
sequence composed of its ancestor comments.

\subsubsection{Whitelist and Vocabulary}

From each dataset, we derived a dictionary of $20$ thousand
most commonly used tokens. Additionally, each dictionary contained the unknown token (UNK), BOS and EOS tokens. Tokens in the datasets which were not present in their associated vocabularies were replaced by the UNK token.

From each data set, we extracted $10$ million most common label
sequences that also contained at most $100$ tokens. This set of
sequences was used as the negative sample pool for the encoder-encoder
models.  For evaluation we created a whitelist $\cW$ out of the 
$100$ thousand most common sequences. We removed any sequence from this set that contained any UNK tokens to simplify inference.

\subsubsection{Sequence Prediction Task}

To evaluate the quality of these models, we task them to predict the true label
sequence given its context. Due to the computational expense, we sub-sample the
validation data sets to around $1$ million context-label pairs. We additionally
restrict the context-label pairs such that the label sequence is present in the
evaluation set of common messages. We use recall@K as a measure of accuracy of
the model predictions. It is defined as the fraction of test pairs where the
correct label is within K most probable predictions according to the model. For
encoder-encoder models we use an exhaustive search over the evaluation set of
common messages. For ED models we use a beam search with width ranging from $1$
to $15$ over a token prefix trie constructed from the sequences in $\cW$.

\subsection{Model Structure and Training Procedure}

The context encoder, label encoder and decoder are implemented using LSTM
recurrent networks \cite{Hochreiter1997} with peephole connections
\cite{Sak2014}. The context and label token sequences were mapped to embedding
vectors using a lookup table that is trained jointly with the rest of the model
parameters. The recurrent nets were unrolled in time up to $100$ time-steps,
with label sequences of greater length discarded and context sequences of greater
length truncated.

The decoder in the ED model is trained by using the true label sequence prefix
as input, and a shifted label sequence as output \cite{SutskeverVL14}. The
partition function in the softmax over tokens is estimated using importance
sampling with a unigram distribution over tokens as the proposal distribution
\cite{Jean2014}. We sample $512$ negative examples from $Q(\vy)$ to estimate
the partition function for the encoder-encoder model. See
Figure~\ref{fig:networkdiagram} for connectivity and network size details.

All models were trained using Adagrad \cite{Duchi2011} with an initial base
learning rate of $0.1$ which we exponentially decayed with a decade of $15$
million steps. For stability, we clip the L2 norm of the gradients to a
maximum magnitude of $1$ as described in \cite{Pascanu2012}. All models are
trained for $30$ million steps with a mini-batch size of 64. The models are trained in a
distributed manner on CPUs and NVidia GPUs using TensorFlow \cite{Abadi2015}.

\subsection{Results}
\label{sec:results}
We first demonstrate the discrepancy between the local and global
margin in the ED models as discussed in Section~\ref{sec:margin}.  We
used a beam size of 15 to get the top prediction from our trained ED
models on the test data and focussed on the subset for which the top
prediction was incorrect.  We measured local and global margin between
the top predicted sequence ($\vy^-$) and the correct test sequence
($\vy^+$) as follows: Global margin is the difference in their full
sequence log probability. Local margin is the difference in the local
token probability of the smallest position $j$ where $y_j^- \ne
y_j^+$, that is local margin is $\Pr(y_j^+|\vy_{1\ldots
  j-1}^+,\vx,\theta) - \Pr(y_j^-|\vy_{1\ldots j-1}^+,\vx,\theta)$.
Note the training loss of ED models directly compares only the local
margin.

\paragraph*{Global margin is much smaller than local margin}
In Figure~\ref{fig:margin-scatter} we show the local and global margin
as a 2D histogram with color luminosity denoting frequency.  We observe
that the global margin values are much smaller than the local
margins. The prominent spine is for $(\vy^+, \vy^-)$ pairs differing
only in a single position making the local and global margins
equal. Most of the mass is below the spine.  For a significant
fraction of cases (27\% for Reddit, and 21\% for Subtitles), the local
margin is positive while the global margin is negative.  That is, the
ED loss for these sequences is small even though the log-probability
of the correct sequence is much smaller than the log-probability of
the predicted wrong sequence.

\paragraph*{Beam search is not the bottleneck}
An interesting side observation from the plots in
Figure~\ref{fig:margin-scatter} is that more than 98\% of the wrong
predictions have a negative margin, that is, the score of the correct
sequence is indeed lower than the score of the wrong prediction.
Improving the beam-width beyond 15 is not likely to improve these
models since only in 1.9\% and 1.7\% of the cases is the correct score
higher than the score of the wrong prediction.

\begin{figure}[ht]
  \includegraphics[width=0.23\textwidth]{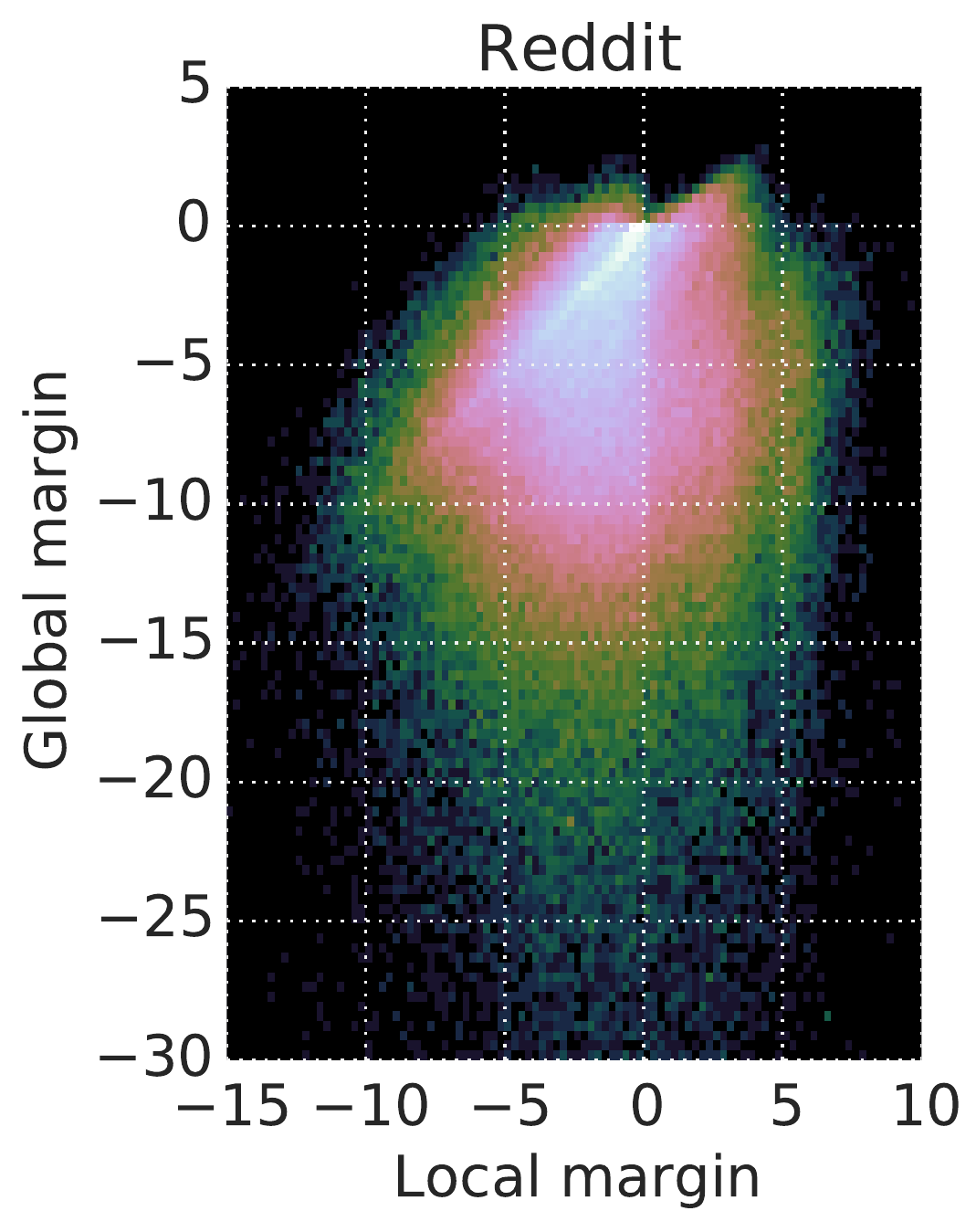}
  \includegraphics[width=0.23\textwidth]{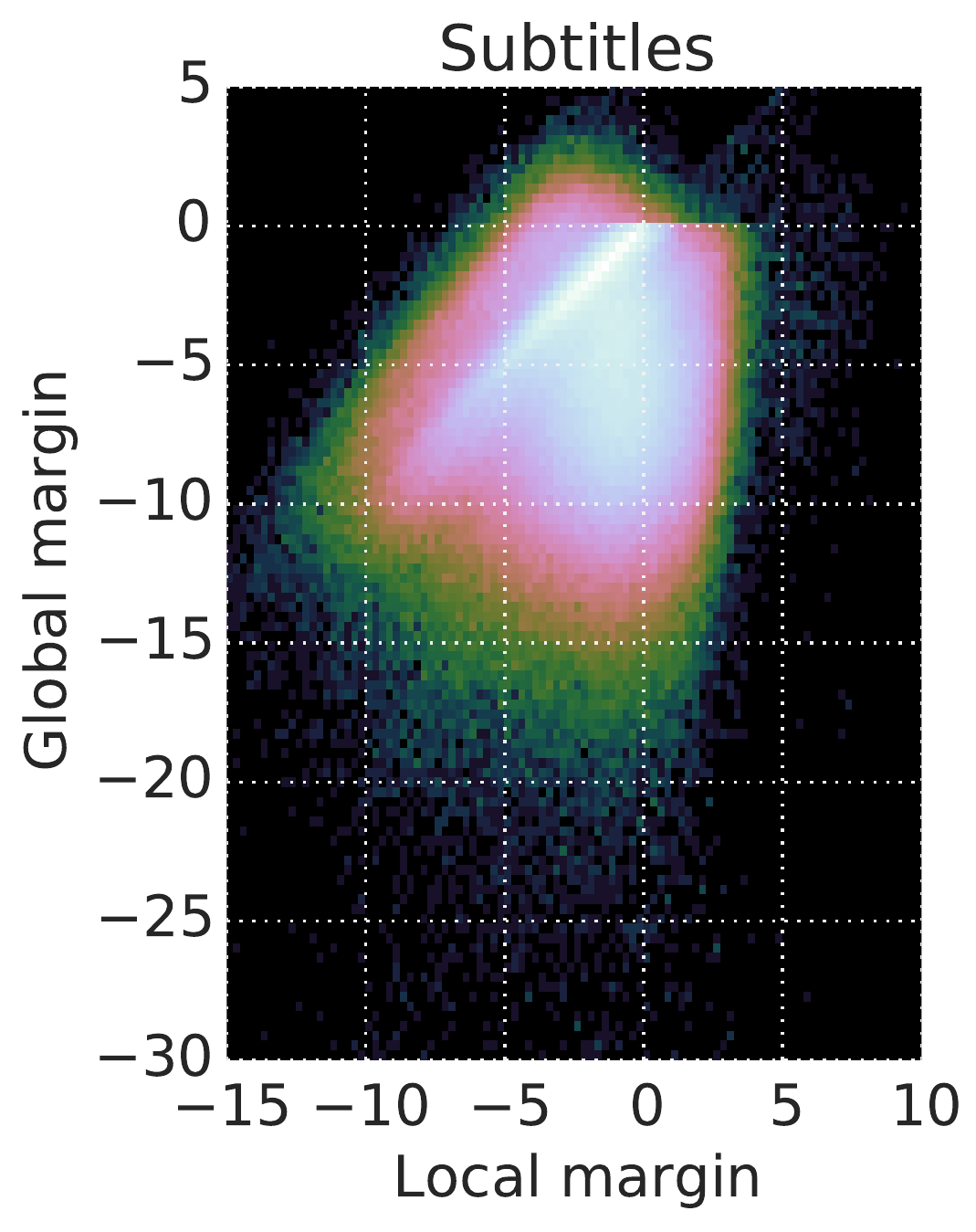}
  \caption{Local margin versus global margin for incorrectly predicted sequences. The color luminosity is proportional to frequency.}
  \label{fig:margin-scatter}
\end{figure}

\paragraph*{Margin discrepancy is higher for longer sequences}
In Figure~\ref{fig:margin} we show that this discrepancy is
significantly more pronounced for longer sequences.  In the figure we
show the fraction of wrongly predicted sequences with a positive local
margin.  We find that as sequence length increases, we have more cases
where the local margin is positive yet the global margin is negative.
For example, for the Reddit dataset half of the wrongly predicted
sequences have a positive local margin indicating that the training
loss was low for these sequences even though they were not adequately
separated.  
\begin{figure}[ht]
  \includegraphics[width=0.23\textwidth]{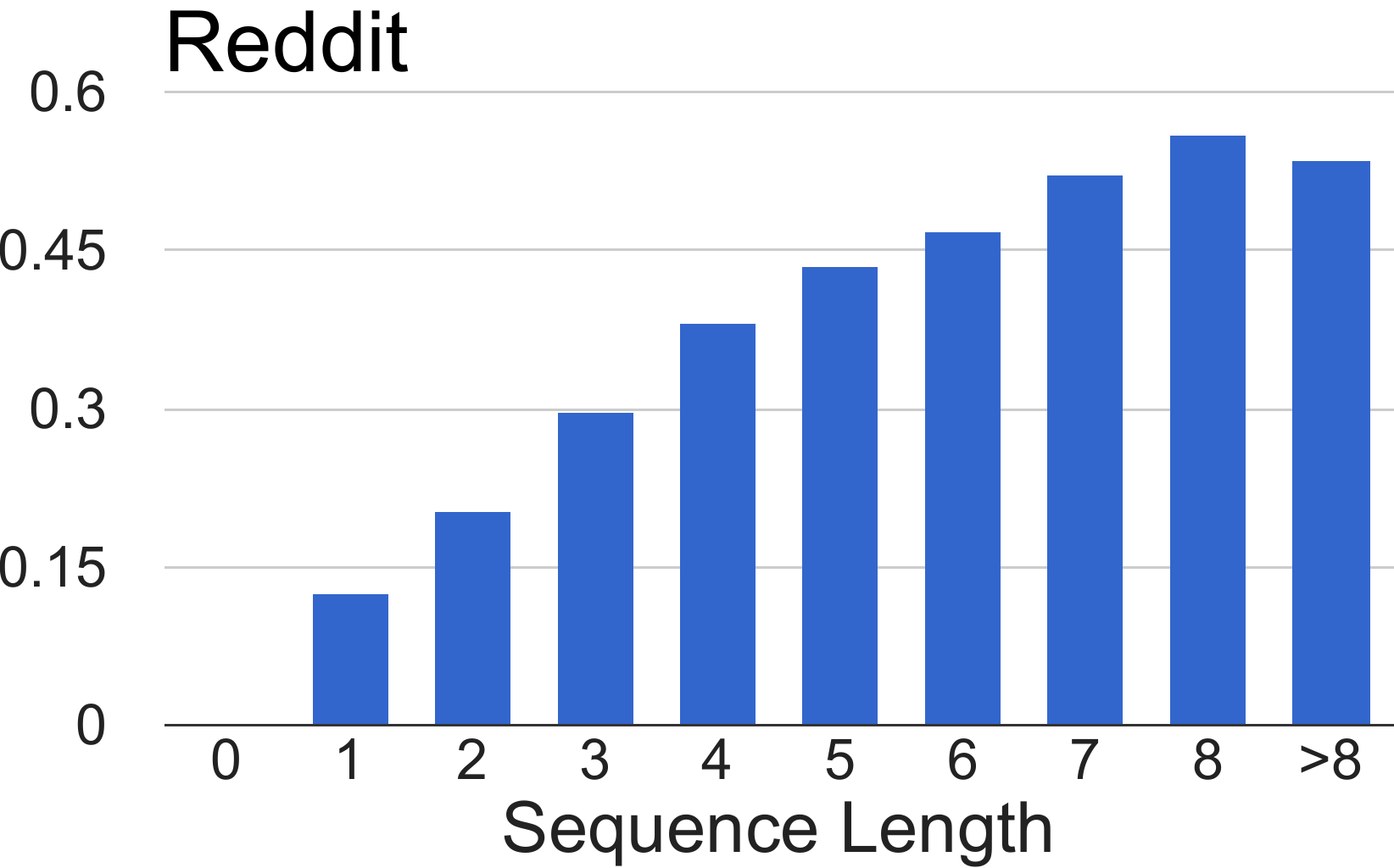}
  \includegraphics[width=0.23\textwidth]{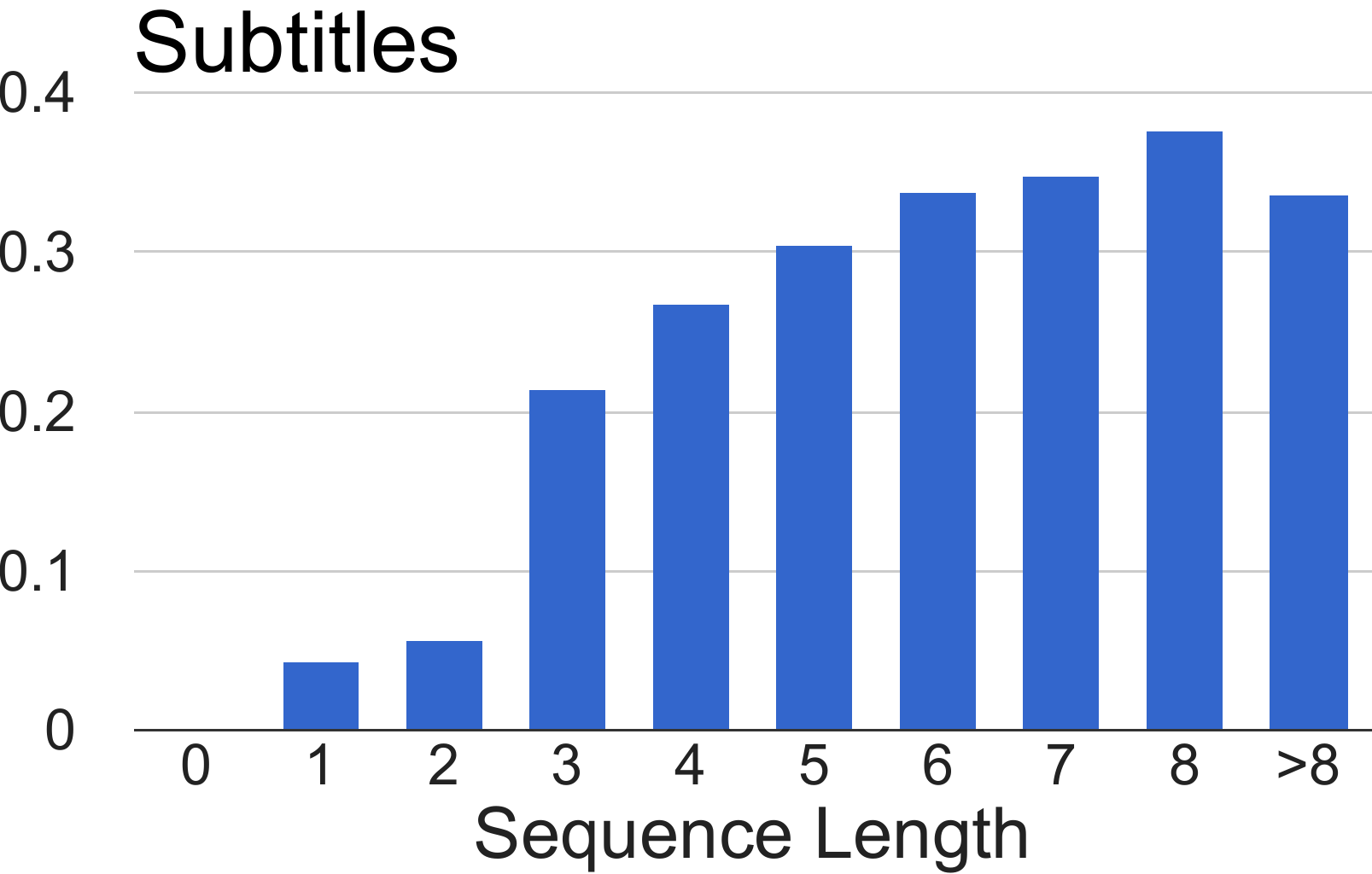}
  \caption{Fraction of incorrect predictions with positive local margin.}
  \label{fig:margin}
\end{figure}

\paragraph*{Increasing beam size drops accuracy for long sequences}
Next we show why this discrepancy leads to non-monotonic accuracies
with increasing beam-size.  As beam size increases, the predicted
sequence has higher probability and the accuracy is expected to
increase if the trained probabilities are well-calibrated.  In
Figure~\ref{fig:beam} we plot the number of correct predictions (on a
log scale) against the length of the correct sequence for beam sizes
of 1, 5, 10, and 15.  For small sequence lengths, we indeed observe
that increasing the beam size produces more accurate results.  For
longer sequences (length $> 4$) we observe a drop in accuracy with
increasing the beam width beyond 1 for Reddit and beyond 5 for
Subtitles.

\begin{figure}[ht]
  \includegraphics[width=0.4\textwidth]{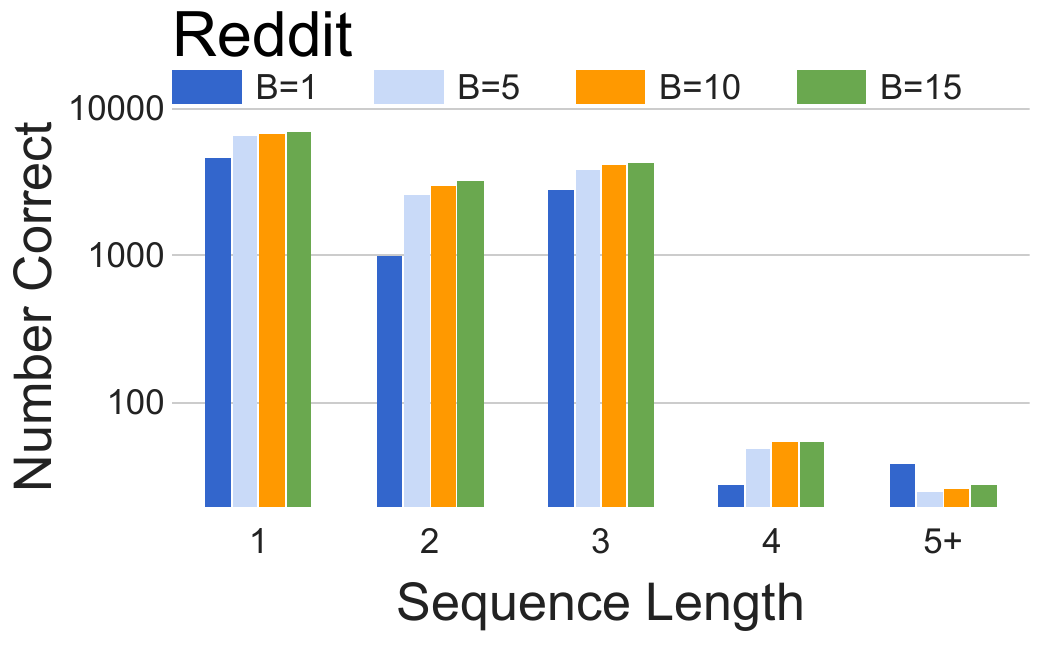}
  \includegraphics[width=0.4\textwidth]{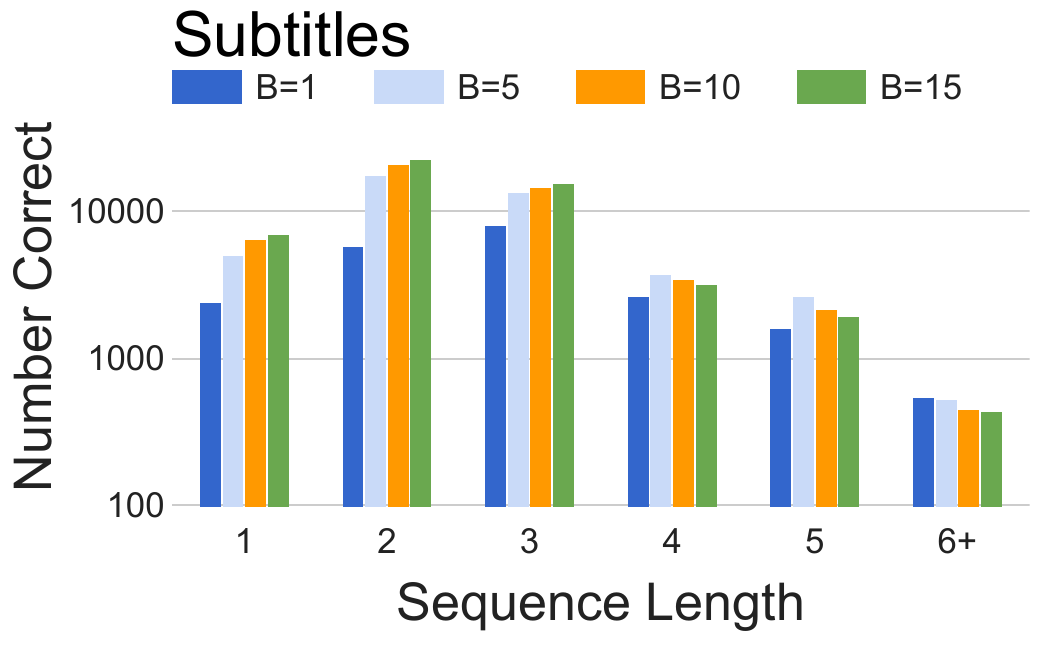}
    \caption{Effect of beam width on the number of correct predictions broken down by sequence length.}
  \label{fig:beam}
\end{figure}

\begin{figure*}[ht]
\begin{center}
  \includegraphics[width=0.32\textwidth]{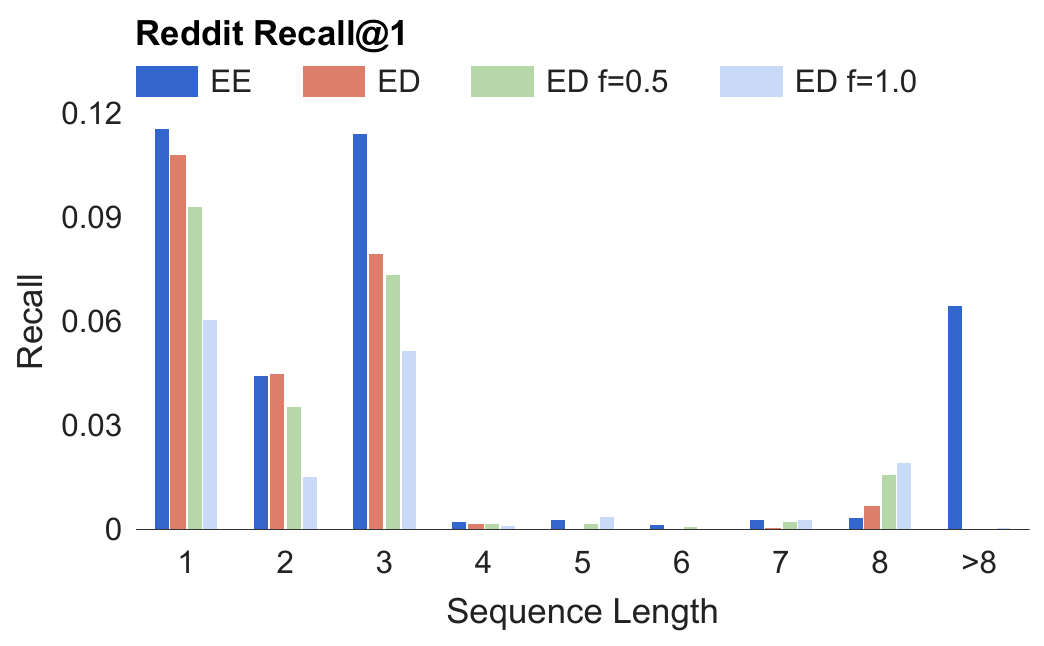}
  \includegraphics[width=0.32\textwidth]{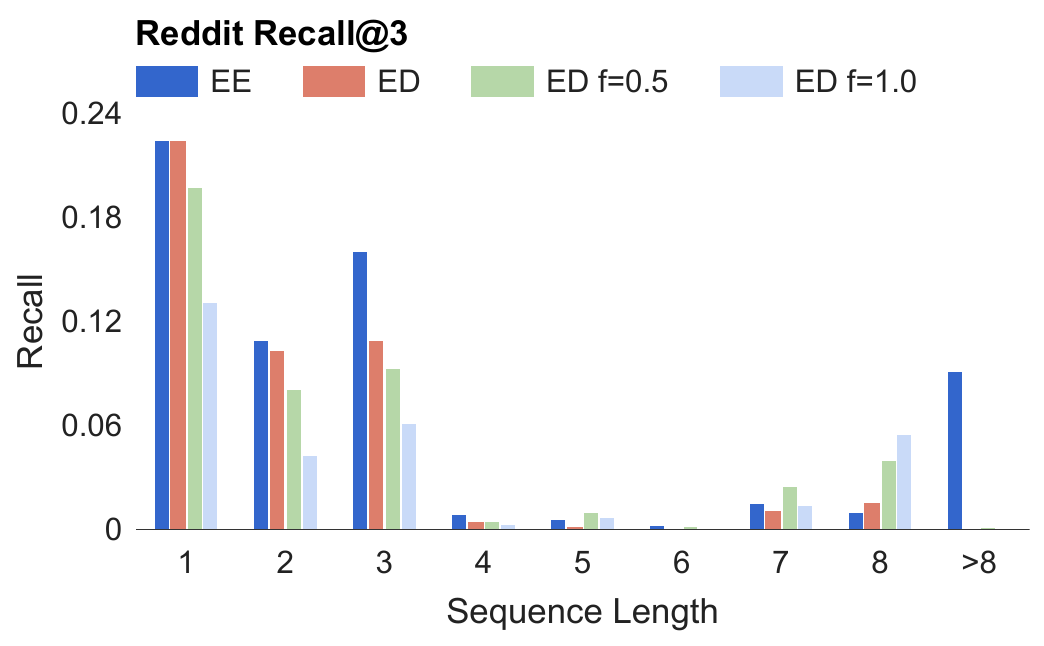}
  \includegraphics[width=0.32\textwidth]{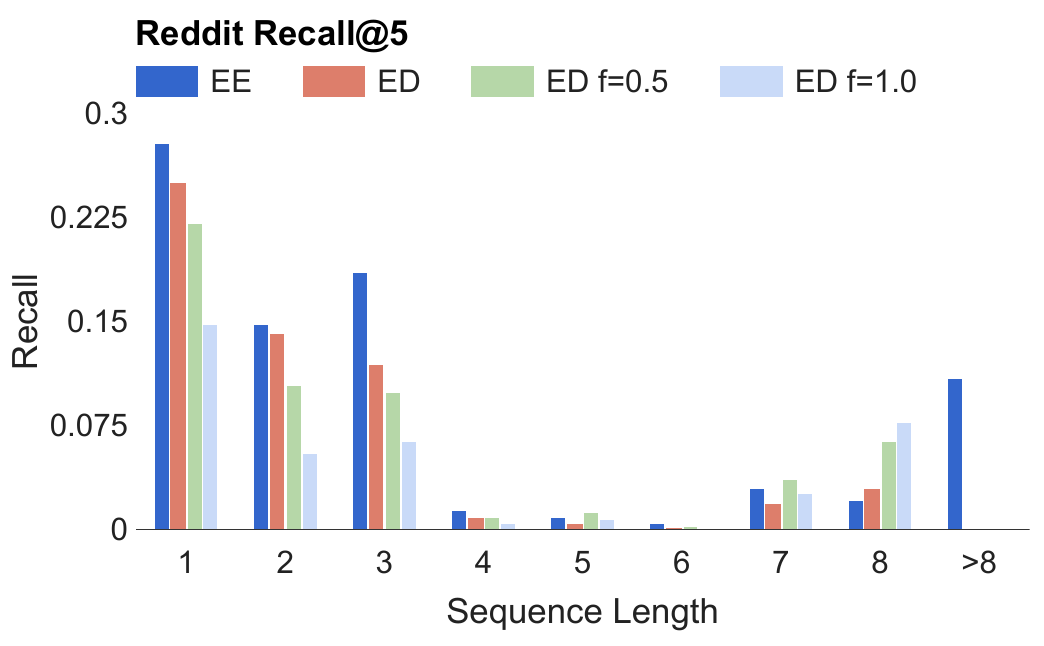}
  
  \includegraphics[width=0.32\textwidth]{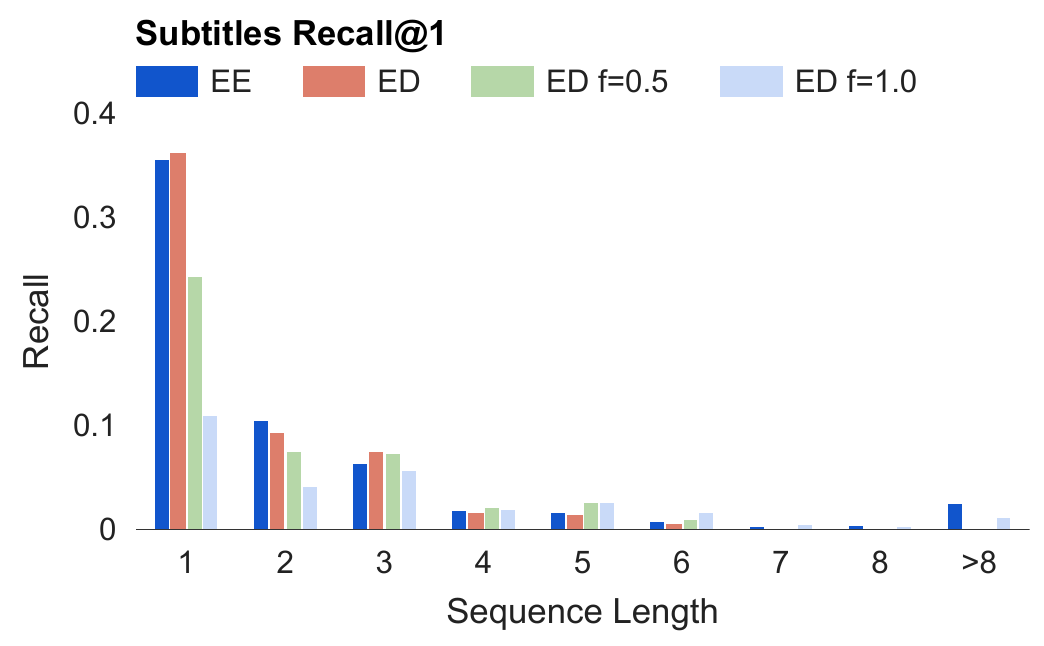}
  \includegraphics[width=0.32\textwidth]{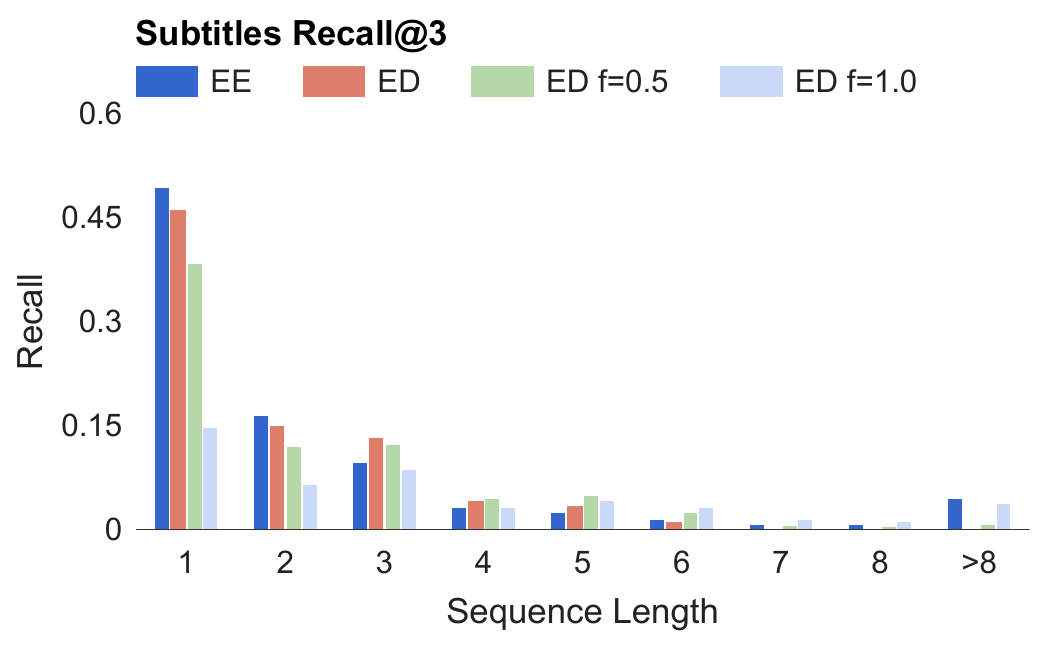}
  \includegraphics[width=0.32\textwidth]{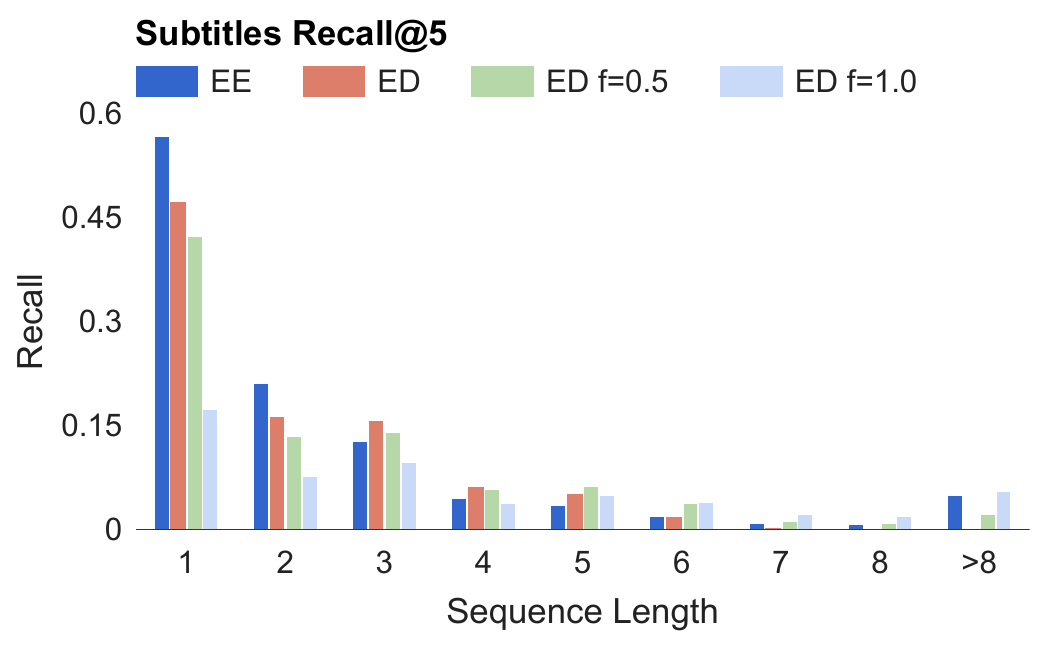}
  \caption{Comparing recall@1, 3, 5 for increasing length of correct sequence.}
  \label{fig:recall}
\end{center}
\end{figure*}

\paragraph*{Globally conditioned models are more accurate than ED models}
We next compare the ED model with our globally conditioned
encoder-encoder (EE) model.  In Figure~\ref{fig:recall} we show the
recall@K values for K=1, 3 and 5 for the two datasets for increasing
length of correct sequence.  We find the EE model is largely better
that the ED model.  The most interesting difference is that for
sequences of length greater than 8, the ED model has a recall@5 of
zero for both datasets.  In contrast, the EE model manages to achieve
significant recall even at large sequence lengths.

\paragraph*{Length normalization of ED models}
A common modification to the ED decoding procedure used to promote longer
message is normalization of the prediction log-probability by its length
raised to some power $f$ \cite{ChoMBB14,Graves13}. We experimented with two
settings, $f=0.5$ and $1.0$. Our experiments show that while this indeed
promotes longer sequences, it does so at the expense of reducing the accuracy
on the shorter sequences.

\section{Related Work}

In this paper we showed that encoder-decoder models suffer from length
bias and proposed a fix using global conditioning.
Global conditioning has been proposed for other RNN-based sequence
prediction tasks in \cite{YaoPZYLG14} and \cite{PezeshkiFBCB15}.  The
RNN models that these work attempt to fix capture only a weak form of
dependency among variables, for example they assume $\vx$ is seen
incrementally and only adjacent labels in $\vy$ are directly
dependent.  As proved in \shortcite{PezeshkiFBCB15} these models are
subject to label bias since they cannot represent a distribution that
a globally conditioned model can.  Thus, their fix for global
dependency is using a CRFs.  Such global conditioning will compromise
a ED model which does not assume {\em any conditional independence}
among variables.  The label-bias proof of \shortcite{PezeshkiFBCB15}
is not applicable to ED models because the proof rests on the entire
input not being visible during output. Earlier illustrations of label
bias of MeMMs in ~\cite{Bottou1991,Lafferty2001} also require local
observations.
In
contrast, the ED model transitions on the entire input and chain rule
is an exact factorization of the distribution. Indeed one of the
suggestions in \cite{Bottou1991} to surmount label-bias is to use a
fully connected network, which the ED model already does.

Our encoder-encoder network is reminiscent of the dual encoder network in
\cite{Pineau2015}, also used for conversational response generation. A crucial
difference is our use of importance sampling to correctly estimate the
probability of a large set of candidate responses, which allows us to use the
model as a standalone response generation system. Other  differences
include our model using separate sets of parameters for the two encoders, to
reflect the assymetry of the prediction task. Lastly, we found it crucial for
the model's quality to use multiple appropriately weighed negative examples for
every positive example during training.

\cite{marc2016sequence} also highlights limitations of the ED model
and proposes to mix the ED loss with a sequence-level loss in a
reinforcement learning framework under a carefully tuned schedule.
Our method for global conditioning can capture sequence-level losses
like BLEU score more easily, but may also benefit from a similar mixed loss function.

\section{Conclusion}
We have shown that encoder-decoder models in the regime of finite data
and parameters suffer from a length-bias problem. We have proved that
this arises due to the locally normalized models insufficiently
separating correct sequences from incorrect ones, and have verified
this empirically.  We explained why this leads to the curious
phenomenon of decreasing accuracy with increasing beam size for long
sequences.  Our proposed encoder-encoder architecture side steps this
issue by operating in sequence probability space directly, yielding
improved accuracy for longer sequences.

One weakness of our proposed architecture is that it cannot generate
responses directly. An interesting future work is to explore if the ED
model can be used to generate a candidate set of responses which are
then re-ranked by our globally conditioned model. Another future area
is to see if the techniques for making Bayesian networks
discriminative can fix the length bias of encoder decoder
networks~\cite{PeharzTP13,Dana2012}.

{\small
\bibliography{main}
} 
\bibliographystyle{emnlp2016}

\end{document}